%% file: AAMAS_2026_sample.tex
\documentclass[sigconf]{aamas}

\usepackage{balance} 
\usepackage{algorithm}
\usepackage{algorithmic}
\usepackage{amsmath}

\newtheorem{theorem}{Theorem}

\newtheorem{corollary}[theorem]{Corollary}

\newtheorem{assumption}[theorem]{Assumption}
\newtheorem{problem}[theorem]{Problem}

\setcopyright{ifaamas}
\acmConference[ ]{ }
\copyrightyear{2026}
\acmDOI{}
\acmPrice{}
\acmISBN{}

\acmSubmissionID{1789}

\title[Scenario Generation for Risk-Aware Reinforcement Learning]{Scenario Generation for Risk-Aware Reinforcement Learning with Probably Approximately Safe Guarantees}

\author{Mohit Prashant}
\affiliation{
  \institution{Nanyang Technological University}
  \city{ }
  \country{ }
  }
\email{mohit010@e.ntu.edu.sg}

\author{Arvind Easwaran}
\affiliation{
  \institution{Nanyang Technological University}
  \city{ }
  \country{ }
  }
\email{arvinde@ntu.edu.sg}

\begin{abstract}
Guaranteeing safety is critical to the deployment of reinforcement learning (RL) agents in the real-world, especially as policies learned using deep RL may demonstrate susceptibility to transition perturbations that result in unknown or unsafe behaviour. A method of policy verification is to construct probabilistic barrier-certificates by sampling policy trajectories with respect to safety constraints, thereby demarcating known safe behaviour from unknown behaviour. Obtaining tight upper and lower bounds on the probability of violation of these constraints may be difficult if the policy is susceptible to transition uncertainty or perturbation that places the agent in insufficiently explored states. To address this, we approximate the distribution of the encountered state-space using a variational autoencoder (VAE) and construct upper and lower-bound barrier-certificates using latent characteristics of states to optimize for regions of known, safe behaviour with high confidence. We frame this in our work as a dual optimization problem where the lower-bound barrier-certificate presents a more conservative estimate of the safe region than the upper-bound barrier-certificate. Sampling states that lie within the set difference of the two during training, i.e. the non-robust region, allows us to tighten the upper and lower bounds to provide sharper probabilistic guarantees on safety. Within our study, we describe the guarantees placed and demonstrate the tightness of our bounds experimentally.
\end{abstract}

\keywords{PAC Guarantees, Reinforcement Learning, Safety}

\newcommand{\BibTeX}{\rm B\kern-.05em{\sc i\kern-.025em b}\kern-.08em\TeX}

\begin{document}


\pagestyle{fancy}
\fancyhead{}


\maketitle 


\input{Sections/1-Introduction}
\input{Sections/2-Related}

\input{Sections/3-Preliminaries}
\input{Sections/4-PAS}

\input{Sections/5-Learning}
\input{Sections/6-Experiments}

\input{Sections/8-Conclusion}



\begin{acks}
This research is supported by the National Research Foundation, Singapore and DSO National Laboratories under the AI Singapore Programme (AISG Award No: AISG2-RP-2020-017). This research is also supported by the National Research Foundation, Prime Minister’s Office, Singapore under its Campus for Research Excellence and Technological Enterprise (CREATE) programme through the programme DesCartes.
\end{acks}


\bibliographystyle{ACM-Reference-Format} 
\balance



\end{document}

%% file: Sections/1-Introduction.tex
\section{Introduction}
Guaranteeing safety is critical to the deployment of deep reinforcement learning (RL) agents in the real-world, especially as policies learnt using RL may fail to generalize to novel scenarios. Specifying safety-constraints for RL agents and subsequently being able to meet them is necessary in domains like aviation, healthcare and robotics. Obstructions to meeting these constraints during the training process include, but are not limited to, biased or insufficient exploration, perturbations to the transition function and differences between the state-distribution encountered during training and deployment \cite{uncer9, uncer12, uncer15, OOD4}. As such, the verification of deep RL policies with respect to safety-constraints is necessary. 

There are numerous works addressing the problem of safety-verification in the field of control, with the classical approach aiming to prove with mathematical rigour that a given trajectory from an initial state will/will not violate the constraints placed \cite{classic1, classic2, classic3}. The limitation of these methods are that they rely on a model-checking principle and require knowledge of the transition dynamics of the system for verification. As such, model-checking methods are difficult to implement if the underlying transition function is unavailable or if the environment is too complex to allow for an approximation of its dynamics.

To address this issue, recent studies have tackled the problem of conducting safety-verification of RL policies in model-free settings, i.e. in the absence of knowledge on transition dynamics \cite{safeRL1, safeRL2}. A method of doing so is by placing probably approximately correct (PAC) guarantees on the policy outcome with respect to the violation of the safety-constraints \cite{pas}. Formulating this problem as a chance constrained program (CCP), where the safety-constraints need only be satisfied with a certain level of confidence, allows for an empirical evaluation of the constraint satisfaction through the sampling of trajectories. In practice, this methodology determines upper and lower bounds on the probability of violation of the safety-constraints, $\epsilon_1, \epsilon_2 \in (0, 1)$, with a user-defined confidence $\delta \in (0, 1)$ \cite{campi}. However, the quality of these guarantees are dependent on the tightness of the bounds, i.e. $|\epsilon_1 - \epsilon_2|$, with looser bounds providing less information on policy safety.

There are various factors that can contribute to inducing a difference between upper and lower-bounds on violation of safety-constraints. This can occur due to a difference between the distribution of states encountered during training and the distribution of states encountered during deployment \cite{uncer9}. Similarly, this can occur due to lacking exploration as the agent is likely to find itself in insufficiently encountered territory without having explicitly violated the safety-constraints \cite{uncer17}. However, this implicitly indicates that the quality of guarantees placed is dependent on the uncertainty of the policy with respect to the state-space.

In complex environments, some states are unlikely to be sufficiently encountered by agents during training and attempting to encounter all states during training can also be infeasible. Therefore, we instead aim to consider the problem of robust exploration, i.e. exploration that builds on an existing demarcated safe region wherein the probability of perturbations to transitions placing the agent in a non-safe region are minimized. We use upper and lower-bound barrier-certificates to determine the region of exploration as the symmetric difference between the sets of safe states corresponding to each barrier, noting that the upper-bound barrier region supersets the lower-bound region. As such, prioritizing exploration within the scenarios corresponding to these states is necessary to narrow the difference between $\epsilon_1$ and $\epsilon_2$.

By formulating barrier-certificates as part of a chance-constrained program (CCP), we take a scenario-based approach to optimize a safe region that we determine with high confidence using a number of sampled trajectories. By constructing a dual optimization problem, with $\epsilon_1, \epsilon_2$ representing the upper and lower-bounds on the violation of the constraints with $\delta$ confidence, we can determine regions where the policy exhibits safe behaviour with the aforementioned degrees of constraint violation. Mapping the barrier-certificates and corresponding safe-regions to a generative distribution trained on the state-space for the environment allows us to sample and generate states within the lower-bound region that are outside of the upper-bound region. With this method, we can conduct a second learning phase for the agent to improve the robustness of its policy by providing states that are tentatively unsafe. Within the rest of this study, we denote the states outside of the upper-bound region and within the lower-bound region as \textit{tentatively unsafe} states.


\subsection{Main Contributions}

Our novel contributions to this body of work is twofold. \textbf{Firstly}, we formulate the safety guarantee as a dual CCP, allowing us to provide both upper and lower-bound barrier-certificates by optimizing safe regions using sample trajectories from the learnt policy. By training a variational autoencoder (VAE) on the encountered state-space we are able to also encode the regions demarcated by the barrier-certificate. Further, encoding barrier-certificates within the latent space of the VAE allows for a determination of the states and characteristics of the safe/unsafe scenarios. \textbf{Secondly}, we utilize the barrier-certificates in conjunction with the VAE to produce states that are relevant to improving the robustness of the safe region and reducing the difference between $\epsilon_1$ and $\epsilon_2$, i.e. tentatively unsafe state. We utilize the latent Gaussian distributive property of the VAE in sampling states between the upper and lower-bound barrier-certificates when constructing novel scenarios. We describe the guarantees placed and demonstrate efficacy of our method through the tightness of our bounds, experimentally, on well-established \textit{Gymnasium} Environments \cite{gym}.

We also demonstrate that improving the quality of the guarantee with strategies like introducing perturbations into transitions or utilizing an epsilon-greedy strategy with the intent of exploration is significantly less effective than generating states within the upper and lower-bound safety regions. We also compare the convergence rates of the bounds produced by our work with other curriculum-based and phased learning methods for RL. 

%% file: Sections/2-Related.tex
\section{Related Work}

\subsection{Safety-Verification of Deep RL Policies}
A body of literature that is related to our work is the safety-verification of deep RL and the provision of guarantees on policy with model-checking using approximations of the environment's transition dynamics that have been learnt using an exploration process \cite{dynamics1, dynamics4, dynamics3}. As opposed to the work on model-checking we cite previously, these studies aim to learn the dynamics with high fidelity and posit approximate guarantees. Similarly, \cite{dynamics5} proposes a program synthesis-based method to derive interpretable and verifiable policies from a pre-trained RL policy, making model-checking easier. However, both of these approaches are likely to have scalability issues in complex domains with large state and action spaces, wherein approximating the dynamics becomes computationally infeasible.

\cite{formal1} introduces a formal verification-based method to ensure safe learning in RL. They provide a formal proof that integrating safety controls obtained through formal verification into the learning process preserves safety guarantees. Similarly, \cite{formal2} propose a probabilistic method for verifying the safety of RL systems. They use a probabilistic model-checking technique on an abstracted Markov Decision Process to calculate an upper-bound on the probability of unsafe behaviour. 
However, like the previous body of literature, these works also depends on learning an accurate RL environment model, which is computationally infeasible to scale with environment complexity.

We briefly discuss a body of research concerning barrier-certificates, which are traditionally used in safety-verification to certify properties of dynamical systems \cite{bc1, bc2, bc3}. \cite{bc4} discusses a safety-verification method that iteratively learns the barrier-certificate alongside the policy and the MDP dynamics. Their approach certifies that the optimized policy within the barrier region is safe without any violations, but it requires a pre-determined set of unsafe states, which may not be available in all problem definitions. Similarly, \cite{classic2} utilizes barrier-certificates to guarantee safety for any policy learned with any RL algorithm using Gaussian Processes to model transition dynamics, which necessitates some domain-specific knowledge. Our work draws most closely from \cite{pas}, which presents probably approximately safe guarantees on safety-constraints being met and constructs barrier-certificate functions over the state-space corresponding with the safe region. We extend this work by mapping the barrier-certificates to generative latent spaces, which we use to generate scenarios that are capable of improving the robustness of the policy through training.


\subsection{Curriculum Learning}
Another body of literature related to our work is curriculum-based reinforcement learning (CRL) \cite{curr1}. (CRL) is a framework to improve the efficiency and performance of reinforcement learning (RL) agents by leveraging structured training regimes. CRL organizes training tasks incrementally, starting from simpler tasks and gradually increasing their complexity. It is generally accepted that there are three components to CRL: \textit{task generation}, the process by which intermediate tasks are created for the agent to learn over; \textit{sequencing}, the process by which the created tasks are ordered; and \textit{transfer learning}, the abstract process under which knowledge is preserved between adjacent domains/environments \cite{curr4, curr5}.

Our study focuses on \textit{task generation} and \textit{sequencing} within the context of safe RL, wherein we attempt to generatively produce scenarios in a sequence that allows for the improvement of probabilistic safety guarantees. Generatively producing scenarios to incrementally improve a policy is a technique that is explored in \cite{curr2}, among others. We use a form of this method that allows us to place probably approximately safe constraints on the latent space to inform the generation process. Similar techniques are explored in \cite{curr3, curr6, curr7}, which attempt to adversarially target the agent by structurally disturbing the input space in order to develop robustness. 




%% file: Sections/3-Preliminaries.tex
\section{Preliminaries and Notation} \label{definitions}

\subsection{Markov Decision Processes}
We model our learning problem as a continuous Markov Decision Process (MDP) characterized by the parameters $(S, A, T, R, \gamma, H)$: $S$ is the state-space within which agents can occupy states; $A$ is the action-space, describing the actions agents can take to transition between states; $T : S \times S \times A \rightarrow [0, 1]$ is a function that determines the transition probability to a state given a state-action; $R : S \times A \rightarrow \mathbb{R}$ is the expected reward from executing a state-action; $\gamma \in (0, 1)$ is the discount factor for future rewards; $H \in \mathbb{Z}^+$ is the length of horizon of each episode. The aim of an RL algorithm applied to this MDP is to learn a state-wise policy that maximizes the expected cumulative discounted reward (value), $V$. 

\footnotesize
\begin{equation*}
     V(s_t) := \max_{a_t \in A} \left[ R(s_t, a_t) + \gamma \sum_{s_{t+1} \in S} V(s_{t+1}) T(s_{t+1} | s_t, a_t) \right]
\end{equation*}
\normalsize

Let $s_t \in S, a_t \in A$ represent the state and action at timestep $t$. Similarly, we denote the policy at timestep $t$ using $\Pi_t : S \rightarrow A$. As the learning problem is model-free, note that $T$ and $R$ are unknown to the learner within this study. Further, a trajectory in an MDP is a sequence of states $\{ s_0, s_1, s_2 ...  \}$, where $s_{n+1}$ is transitioned to from $s_n$ and the probability of observing a particular transition sequence $\tau$ under policy $\Pi_t$ is as follows.

\begin{equation*}
    \mathbb{P}(\tau) = \prod_{i=0}^H T(s_{i+1} | s_i, \Pi_t(s_i))
\end{equation*}

Within our setting, all trajectories must begin from an initial state $s_0 \in S^{init} \subset S$, which is a subset of the state-space. During the first training phase, the sampling of initial states from $S^{init}$ is conducted uniformly.

\subsection{Barrier-Certificates}
Barrier-certificates are primarily utilized in robotics and control systems that have safety-constraints placed on them to evaluate whether a trajectory resulting from a state within the system will lead to a violation of the constraint. As such, they typically take the form of functions, $B : S \rightarrow \mathbb{R}$, denoting whether the policy from a state exhibits safe behaviour or not. In this work, safe states $s \in S$ are characterized by $B(s) \geq 0$ and $B(s') \geq 0$ where $s' \in S$ is the subsequent state transitioned to from $s$ under the current policy. 

Determining an exact barrier-certificate in RL systems is difficult as policies can be stochastic. Noting this, we aim to determine an approximate barrier-certificate, $B(s)$, that identifies states leading to violations of the safety constraints to a degree less than $\epsilon \in (0, 1)$. To do this, we construct probably approximate barrier-certificate (PABC) functions $B_1(s), B_2(s)$ that constitute upper and lower-bounds on $B(s)$ with $\delta \in (0, 1)$ confidence \cite{pas}. With $\epsilon_1 \leq \epsilon_2$, $B_1(s), B_2(s)$ identify the states leading to violations of the safety constraints to a degree less than $\epsilon_1 \in (0, 1)$ and $\epsilon_2 \in (0, 1)$, respectively, with the aforementioned confidence. The regions of safety corresponding with these barrier-certificates are denoted by $S^{B_1} \subset S$ and $S^{B_2} \subset S$, respectively.

In this work, we utilize parametrized linear functions to act as barrier-certificates when identifying safe/unsafe behaviour by sampling trajectories. If a trajectory is characterized by a violation of the safety-constraint that is greater than $\epsilon_1$ or $\epsilon_2$, with high confidence through sufficient sampling, then the states within the trajectory prior to encountering the violation are marked unsafe with a level of probability corresponding to the probability of occurrence of the trajectory by the respective barrier-certificate. We formulate the problem of constructing upper and lower-bound PABC functions in Section \ref{formulation}.

\subsection{Variational Autoencoders} 
Variational autoencoders (VAE) are a class of generative deep neural networks based on the encode-decode approach of autoencoders \cite{VAE}. For an observation, $s$, the model assumes the existence of an underlying latent variable, $z$, that maps to the observation space and is distributed by an intractable prior, $\mathcal{P}(z)$, and a similarly intractable posterior, $\mathcal{Q}(z|s)$. The aim of the encoder and decoder are to compute the posterior and likelihood using Gaussian approximations, $\mathcal{Q}_\theta (z|s)$ and $\mathcal{P}_\theta (s|z)$, respectively, and reconstruct the input by sampling the latent Gaussian distribution. We aim to identify the barrier functions $B_1$ and $B_2$ within the latent space in order to generate the scenarios that will advance the upper-bound barrier-certificate.

\subsection{Setting and Assumptions}
Our work is applicable to both continuous and discrete MDP settings, which we demonstrate with our experiments in Section \ref{experiments}. However, in deriving the theoretical results within our work using probably approximately correct (PAC) guarantees and scenario-based optimization solutions based on \cite{campi}, we include the following assumptions present in the aforementioned works.


\begin{assumption}
    For all states $s_0 \in S^{init} \subset S$ and all values of $\epsilon \in (0, 1)$, $B_1(s_0) \geq 0$ and $B_2(s_0) \geq 0$ with probability $1$, i.e. all initial states exhibit safe behaviour.
\end{assumption}

We take this assumption from \cite{pas}. It is reasonable to make this assumption as barrier-certificates aim to contain the initial states within. Further, our intention in making this assumption is to grow the safe region corresponding with the barrier-certificate outward from the set of initial states. Lastly, if a state in $S^{init}$ is unsafe, it can be removed from the sampling distribution of initial states during training. We also make the following assumption about the upper and lower-bound barrier certificate functions and their corresponding safe regions.

\begin{assumption}
    For parameters $\delta, \epsilon_1, \epsilon_2 \in (0, 1)$, let barrier-certificate functions $B_1, B_2$ tolerate unsafe behaviour violating the constraints to $\epsilon_1, \epsilon_2$ extents, respectively, with $\delta$ confidence. If $\epsilon_1 < \epsilon_2$, then $S^{B_1} \subset S^{B_2} \subset S$.
\end{assumption}

This assumption states that stricter barrier-certificate regions are subsets of less conservative barrier-certificate regions. Considering that both barrier-certificates are constructed simultaneously in our work and the trajectories sampled  during their construction are identical, we find that this is a realistic assumption to make in our work. Lastly, a trivial assumption we make in our work to compute PAC guarantees is that the distribution of sampled trajectories from each initial state $s_0 \in S^{init}$ for a policy $\Pi_t$ during training is representative of the distribution of trajectories sampled during deployment under policy $\Pi_t$.

\subsection{Safety Constraints}
We briefly elaborate on the safety constraints used in this paper. We quantify probabilistic safety of a policy using parameters $\epsilon, \delta \in (0,1)$ with respect to some arbitrary safety-criteria. Most prior works dealing with quantitative safety define unsafe behaviour state-wise, i.e. if the agent enters a particular state, the policy is likely unsafe \cite{pas, safeRL1}. While this serves as a good proxy for measuring unsafe behaviour, it is unable to capture more complex causal relationships, i.e. sequential, that would escape state-wise safety demarcations. An instance of this would be the average speed of a vehicle exceeding a set limit within a defined horizon. Therefore, instead of assigning a safe/unsafe label to states, we assign them to trajectories and partial trajectories via an empirical indicator function. Our method aims to then evaluate probabilistic barrier-certificates over the state-space using the state-wise likelihoods (given safe/unsafe trajectory).

\subsection{Problem Formalization} \label{formulation}
The objective of our work is to guarantee the level of safety of a deep RL policy at a timestep $t$, $\Pi_t$, using barrier-certificates and, subsequently, improve the robustness of the policy using the difference between the upper and lower-bound barrier-certificate regions. Specifically, we construct barrier-certificates over the encoded state-space based on the probability of the agent exhibiting unsafe behaviour via the trajectories that are likely to violate the safety-constraints. 

Assuming that the probability of sampling an initial state $s_0 \in S^{init}$ is $\mathbb{P}_S(s_0)$, denoting a trajectory $\{ s_0, s_1 ... \}$ with $\tau$, the probability of encountering the trajectory is as follows.

\begin{equation*}
    \mathbb{P}(\tau) := \mathbb{P}_S(s_0) \prod_{i=0} T(s_{i+1} | s_i, \Pi_t(s_i))
\end{equation*}

Denoting the set of all possible trajectories using $\mathcal{T}$, we also denote the empirical indicator function $U : \mathcal{T} \rightarrow \{ -1, 1 \} $ which maps safe trajectories to the value $1$ and unsafe trajectories to the value $-1$, denoting a violation of the safety-constraints. Determining a barrier-certificate with absolute certainty is expensive, especially in settings with high complexity as is determining the exact level of violation, $\epsilon \in (0, 1)$, of the safety-constraints by a policy as it requires enumeration over the set of all possible trajectories. Further, as the setting of this problem is model-free, the distribution of trajectories is unavailable. Therefore, we aim to determine probably approximate barrier-certificates that bound the level of violation of the constraints with high confidence by sampling trajectories empirically, introducing a level of confidence $\delta$ in the solution we obtain.

The first part of our work is used to determine upper and lower-bounds on $\epsilon$, with high confidence, and the corresponding safe regions of the encoded state-space for which these bounds hold. We denote the distribution of trajectories at timestep $t$ with $\mathcal{T}_t$. Similarly, we denote the encoded states using $z \in Z \subset \mathbb{R}^n$, for some $n \in \mathbb{Z}^+$, under the learnt probability distribution $z \sim \mathcal{Q}_\theta(z|s)$. Given a trajectory $\tau := \{ s_0, s_1 ...\} \sim \mathcal{T}_t$, we denote distribution of the corresponding encoded trajectory $\zeta := \{ z_i : z_i \sim \mathcal{Q}_\theta(z|s_i) \}$ using the shorthand notation $\zeta \sim \mathcal{Q}_\theta(\zeta|\tau)$ to represent the sampling of an encoded trajectory conditioned on a specific trajectory, $\tau$, and $\zeta \sim \mathcal{Z}_t$ to represent sampling an encoded trajectory from the distribution of all encoded trajectories.

For the remainder of this paper, we also denote barrier functions within the encoded state-space $Z$ using $B_{1\phi}, B_{2\phi}$ and the corresponding mappings of the barrier functions to the state space $S$ using $B_1, B_2$.

\begin{problem} \label{p1}
    For parameter $\delta \in (0, 1)$ and a policy $\Pi_t$, determine a lower-bound value $\epsilon_1 \in (0, 1)$ and barrier-certificate function $B_1$, such that with confidence $\delta$, trajectory $\tau := \{ s_0, s_1 ...\} \sim \mathcal{T}_t$ and corresponding encoded trajectory $\zeta \sim \mathcal{Q}_\theta(\zeta|\tau)$  violate the safety constraints with less than $\epsilon_1$ probability at timestep $t$. That is,

    \begin{equation} \label{b1}
        \mathbb{P} \left( B_{1\phi}(z) > 0 \; | \; \mathbb{P}(U(\tau) < 0) > \epsilon_1 \right) < \delta \;,\; \forall z \in \zeta
    \end{equation}
\end{problem}

Similarly, we formulate the problem of finding the upper-bound.

\begin{problem} \label{p2}   
    For parameter $\delta \in (0, 1)$ and a policy $\Pi_t$, determine an upper-bound value $\epsilon_2 \in (0, 1)$ and barrier-certificate function $B_2$, such that with confidence $\delta$, a trajectory $\tau := \{ s_0, s_1 ...\} \sim \mathcal{T}_t$ and corresponding encoded trajectory $\zeta \sim \mathcal{Q}_\theta(\zeta|\tau)$  violate the safety constraints with less than $\epsilon_2$ probability at timestep $t$. That is,

    \begin{equation} \label{b2}
        \mathbb{P} \left( B_{2\phi}(z) > 0 \; | \; \mathbb{P}(U(\tau) < 0) < \epsilon_2 \right) < \delta \;,\;\; \forall z \in \zeta
    \end{equation}
\end{problem}

We place no assumptions on the distribution of $s_0 \sim S^{init}$ during the training process, letting it be maintained until timestep $t$, when the barrier-certificates are constructed.

%% file: Sections/4-PAS.tex
\section{Probably Approximate Safe Barrier-Certificates} \label{optimization}
Within this section, we aim to solve Problems \ref{p1} and \ref{p2} by formulating them as primal-dual CCP optimizations and sampling trajectories to determine the degree of violation of safety-constraints by the policy. To reiterate, the objective of determining a barrier-certificate is to determine a region within the latent space, with high confidence, from where trajectories are unlikely to violate the safety-constraints established. 


\subsection{Lower-Bound}
In this section, we construct and train a convex barrier-certificate $B_{\phi1}$ using a linear function parametrized using $\phi_1 := \{ \phi_{w1}, \phi_{b1} \}$, where $B_{\phi1}(z) = \phi_{w1} \cdot z + \phi_{b1}$. Similarly, let $\theta$ represent the parameters of the VAE encoding $\mathcal{Q}_\theta(z|s)$. We extend the methodology in prior works by incorporating the techniques for convex scenario optimization discussed in \cite{campi}. 

Let $z \in Z^{B_{\phi1}}$ denote an encoded state lying within the barrier-certificate. Similarly, let $z' \in Z$ denote a state that can be transitioned to from $z$, for $z \sim \mathcal{Q}_\theta(z|s)$ and $z' \sim \mathcal{Q}_\theta(z'|s')$, for a state transition pair $s, s'$ under transition function $T$. 

Per the constraints established in \cite{pas}, a state can only belong to the barrier region $Z^{B_{\phi1}}$ if its subsequent transition also belongs to the region with high probability. Note this assumes the definite existence of states within the barrier-certificate, an assumption satisfied by initializing the barrier-certificate with the encoded set of states belonging to $S^{init}$.

We formulate the primal CCP below by attempting to maximize the minimum barrier score yielded from an encoded state $z'$ that has been transitioned to from $z$ as this is a condition required for the inclusion of $z$ into the barrier region. Note the formulation of the chance-constraints are with respect to the distribution of sampled trajectories $\zeta \sim \mathcal{Z}_t$, and implicitly $\tau \sim \mathcal{T}_t$. As per the definition of a barrier-certificate, we aim to maximize the minimum barrier score of a subsequent state transitioned to for states within the current barrier region, $Z^{B_{\phi1}}$, subject to the constraints posited in Eq. \eqref{b1}.

\begin{align} \label{ccp1}
    & \max_\phi \min_{z \in Z^{B_{\phi1}}, z' \in \zeta} B_{\phi1}(z')  \;, \nonumber \\
    s.t. \;\;\; &  \mathbb{P}(U(\tau) < 0) < \epsilon_1 , \nonumber \\
    & \forall z \in  \zeta \sim \mathcal{Q}_\theta(\zeta|\tau)
\end{align}

As CCP optimization problems are computationally expensive to solve, we utilize the approach of scenario optimization employed in \cite{pas} which requires sampling $N$ trajectories and relaxing the problem to a deterministic version. In practice, the satisfaction of the constraints by a sufficient number of trajectories is also sufficient to verify the quantitative safety of the policy with respect to parameters $\epsilon_1, \delta$. This allows us to construct the following deterministic program.

\begin{align} \label{ccp2}
    & \max_\phi \min_{z \in Z^{B_{\phi1}}, z' \in \zeta} B_{\phi1}(z')  \;, \nonumber \\
    s.t. \;\;\; & U(\tau_i) > 0, \nonumber \\
    & \forall z \in  \zeta \sim \mathcal{Q}_\theta(\zeta|\tau_i), \nonumber \\
    & \forall i \in \{ 1, 2 ... N \}
\end{align}

We note that the formulated optimization may be quite conservative in its solution. Thus, we further utilize the results of \cite{campi}, which provides guarantees on $\epsilon_1$ under a second relaxation of the problem where we permit for the violation of up to $k_1$ convex constraints.

\begin{theorem} \label{theorem}
    \cite{campi} Let $\delta \in (0, 1)$ be a confidence parameter. If $N$ and $k$ are such that

    \begin{equation*}
        \binom{k + |\phi| - 1}{k} \sum_{i=0}^{k+|\phi|-1} \binom{N}{i} \epsilon^i (1-\epsilon)^{N-i} \leq \delta
    \end{equation*}

    \noindent where $|\phi|$ is the number of optimization parameters, then the $N$-fold probability of the event of a constraint violation $V_{|\phi|}$ occurring is bounded as follows $\mathbb{P}^N ( V_{|\phi|} > \epsilon) < \delta$.
\end{theorem}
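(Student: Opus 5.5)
This is the Campi--Garatti scenario-with-discarded-constraints theorem, and I would prove it by their standard route, specialised to the relaxed program \eqref{ccp2} with $k$ violated constraints. Write $d := |\phi|$ and view each sampled trajectory $\tau_i$ as inducing a convex constraint on $\phi$. This is legitimate because $B_{\phi1}$ is linear in $\phi$, so every constraint of the form ``$B_{\phi1}(z') \geq$ level for all $z' \in \zeta_i$'' is an intersection of half-spaces in $\phi$. Let $\phi^*_k$ be the solution obtained after removing $k$ of the $N$ constraints by a fixed (measurable) removal rule. I assume the scenario-program regularity conditions: existence and uniqueness of the solution (with a tie-break rule if needed) and the non-degeneracy condition that holds with probability one.

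\textbf{Step 1: support constraints.} By Helly's theorem and the standard argument of Calafiore--Campi, the optimum of a convex program in $d$ variables is determined by at most $d$ of the retained constraints. Hence $\phi^*_k$ is fixed by some set $I \subset \{1,\dots,N\}$ of at most $d$ support scenarios together with the set $K$ of $k$ discarded scenarios. Moreover, $\phi^*_k$ violates exactly the scenarios in $K$ and satisfies all the others.

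\textbf{Step 2: union bound over index sets.} The bad event $V(\phi^*_k) > \epsilon$ is contained in the union, over candidate pairs $(I,K)$, of the events
\[
E_{I,K} := \{\phi_{I,K} \text{ satisfies all } \tau_j,\ j \notin I\cup K, \text{ and } V(\phi_{I,K})>\epsilon\},
\]
where $\phi_{I,K}$ is the solution of the program that uses only the scenarios in $I \cup K$. Conditioning on $(\tau_i)_{i\in I\cup K}$, the remaining $N-|I|-k$ samples are independent of $\phi_{I,K}$. So the probability that they all fall in a set of mass at most $1-\epsilon$ is at most $(1-\epsilon)^{N-|I|-k}$.

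\textbf{Step 3: the main obstacle.} Done naively, Step 2 gives the crude factor $\binom{N}{k}\binom{N-k}{d}(1-\epsilon)^{N-k-d}$. Reaching the sharp factor $\binom{k+d-1}{k}\sum_{i=0}^{k+d-1}\binom{N}{i}\epsilon^i(1-\epsilon)^{N-i}$ is the hard step. I would follow Campi--Garatti's refinement in three parts.

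First, the removal rule is required to discard only constraints that the current solution violates. Then the $k$ removed constraints together with the $d$ support constraints form a set of $k+d$ scenarios, and only $k+d-1$ of these are ``free''.

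Second, $V(\phi^*_k)$ is controlled through the number of violated samples. The distribution of the violation probability of $\phi_{I,K}$ is dominated by a Beta$(k+d, N-k-d+1)$ law. Its tail is exactly the binomial sum $\sum_{i=0}^{k+d-1}\binom{N}{i}\epsilon^i(1-\epsilon)^{N-i}$.

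Third, the combinatorial prefactor $\binom{k+d-1}{k}$ counts the ways to split the $k+d-1$ free scenarios into removed and support roles, i.e.\ the multiplicity of the distinct solution paths that a greedy removal can follow.

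\textbf{Step 4: conclusion.} Multiplying the prefactor by the binomial tail and applying the hypothesis gives $\mathbb{P}^N(V_{|\phi|}>\epsilon)<\delta$.

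For the paper's purposes, the key point to verify is the convexity of the constraints in $\phi$, which holds because $B_{\phi1}$ is linear. The rest can then be cited directly from \cite{campi}.
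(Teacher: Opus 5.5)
The paper gives no proof of this statement; it quotes the result from \cite{campi}, so deferring to the citation, as your last sentence does, matches the paper. The sketch you offer in its place, however, has a genuine gap at Step 3, which is the whole content of the theorem. The mechanism you describe there does not work. Your final bound has the shape (prefactor)$\times$(full $\mathrm{Beta}(k+d,N-k-d+1)$ tail). As a union bound, that would need only $\binom{k+d-1}{k}$ events, each of probability at most that tail, and neither half is right. The bad event is covered by one event per candidate discarded set, so there are $\binom{N}{k}$ of them (more in your $(I,K)$ indexing), and no count of ``free'' scenarios or ``greedy paths'' reduces that number. Instead, each event has probability at most $\binom{k+d-1}{k}/\binom{N}{k}$ times the tail. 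Nor can you repair this by reading your second part as a statement about the returned solution. If $V(\phi^*_k)$ were dominated by $\mathrm{Beta}(k+d,N-k-d+1)$, you would get the bound with no prefactor at all, and the theory does not give that for an arbitrary rule that discards violated scenarios. The concrete loss is already in Step 2. By conditioning on all of $I\cup K$ and bounding by $(1-\epsilon)^{N-|I|-k}$, you throw away the conditional probability $V^k$ that the discarded scenarios are violated, and also the distribution of $V$ itself. These are exactly what produce the sharp bound.

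The route that works, and the one in \cite{campi}, takes the union only over the $\binom{N}{k}$ candidate discarded sets $K$ (with $d=|\phi|$). For fixed $K$, let $\phi_K$ solve the program with the $N-k$ retained scenarios. The no-discarding scenario theorem is where the Helly/support-constraint argument enters. It gives $\mathbb{P}^{N-k}\{V(\phi_K)>\epsilon\}\le\sum_{i=0}^{d-1}\binom{N-k}{i}\epsilon^i(1-\epsilon)^{N-k-i}$, i.e.\ $V(\phi_K)$ is stochastically dominated by $\mathrm{Beta}(d,N-k-d+1)$. The scenarios in $K$ are independent of $\phi_K$, so they are all violated with conditional probability $V(\phi_K)^k$. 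Since $v\mapsto v^k\mathbf{1}\{v>\epsilon\}$ is nondecreasing,
\[
\mathbb{P}^N\bigl\{V(\phi_K)>\epsilon,\ \phi_K \text{ violates every scenario in } K\bigr\}\le d\binom{N-k}{d}\int_\epsilon^1 v^{k+d-1}(1-v)^{N-k-d}\,dv .
\]
Now sum over $K$ and use $\binom{N}{k}\binom{N-k}{d}\,d=\binom{k+d-1}{k}(k+d)\binom{N}{k+d}$ together with $(k+d)\binom{N}{k+d}\int_\epsilon^1 v^{k+d-1}(1-v)^{N-k-d}\,dv=\sum_{i=0}^{k+d-1}\binom{N}{i}\epsilon^i(1-\epsilon)^{N-i}$. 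This gives exactly the stated bound. So $\mathrm{Beta}(k+d,N-k-d+1)$ enters only as the $v^k$-tilt of $\mathrm{Beta}(d,N-k-d+1)$, and $\binom{k+d-1}{k}$ is the normalising constant left over from $\binom{N}{k}$, not a count of paths.

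Two smaller points. First, the argument yields $\le\delta$, not $<\delta$: equality holds for $k=0$, $d=1$ and a fully supported problem with $(1-\epsilon)^N=\delta$. Your Step 4 therefore inherits the paper's overstatement. Second, your applicability check is too quick. In \eqref{ccp2} the inner minimum ranges over $z'$ whose predecessor lies in the $\phi$-dependent set $Z^{B_{\phi1}}$, so the epigraph constraints are disjunctions ``$B_{\phi1}(z)<0$ or $t\le B_{\phi1}(z')$'' rather than half-spaces. Also, Algorithm~\ref{algo1a} discards trajectories by $U(\tau)$, not because the returned $\phi$ violates them. The removal assumption your Step 3 needs is therefore not automatically met.
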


This theorem places a confidence interval on the expected violation of the constraints based on the number of scenarios observed and the proportion of the scenarios that are in violation of the constraints. We substitute $V_{|\phi|}$ with the constraints in Eq. \eqref{ccp2} in our formulation. A corollary to this theorem is the following.

\begin{corollary} \label{corollary}
    Let $\delta \in (0, 1)$ be a confidence parameter. If $k$ violations of the constraints are observed within $N$ scenarios and $|\phi|$ is the number of optimization parameters, then,


    \begin{equation}\label{k removal}
    \epsilon \ge \frac{1}{N} \left( k + |\phi| - 1 + \sqrt{2k \ln{ \frac{1}{\delta}} + 2(|\phi|-1)\ln{ \frac{1}{\delta}}} \right) 
    \end{equation}
\end{corollary}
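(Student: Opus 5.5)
We prove the corollary by showing that any $\epsilon$ satisfying \eqref{k removal} also satisfies the hypothesis of Theorem~\ref{theorem}. Then $\mathbb{P}^N(V_{|\phi|} > \epsilon) < \delta$ follows directly. Write $d := |\phi|$ and $m := k + d - 1$. The theorem's condition then reads
\begin{equation*}
\binom{m}{k}\, \mathbb{P}\left( X \le m \right) \le \delta, \qquad X \sim \mathrm{Bin}(N, \epsilon),
\end{equation*}
where $\sum_{i=0}^{m} \binom{N}{i}\epsilon^i(1-\epsilon)^{N-i} = \mathbb{P}(X \le m)$. The plan is to bound the lower binomial tail with a multiplicative Chernoff bound and then solve for $\epsilon$.

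\textbf{Step 1 (tail bound).} For $X \sim \mathrm{Bin}(N,\epsilon)$ with mean $\mu = N\epsilon > m$, we use the lower-tail Chernoff bound
\begin{equation*}
\mathbb{P}(X \le m) \le \exp\left( -\frac{(\mu - m)^2}{2\mu} \right).
\end{equation*}
It therefore suffices that $\frac{(N\epsilon - m)^2}{2N\epsilon} \ge \ln\frac{1}{\delta} + \ln\binom{m}{k}$.

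\textbf{Step 2 (solving for $\epsilon$).} Set $L := \ln\frac{1}{\delta}$. The inequality $(\mu - m)^2 \ge 2\mu L'$, with $L'$ the right-hand side above, is a quadratic inequality in $\mu$. Its larger root is
\begin{equation*}
m + L' + \sqrt{2mL' + L'^2},
\end{equation*}
so any $\mu$ at or above this root works. The stated bound has the form $N\epsilon \ge m + \sqrt{2mL}$, since $2kL + 2(d-1)L = 2mL$. This is exactly the leading-order version of the root with $L' = L$.

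\textbf{Step 3 (main obstacle).} Matching the exact form of \eqref{k removal} is the hard step, for two reasons. First, the combinatorial factor $\binom{m}{k}$ must be absorbed. Second, the lower-order terms $L'$ and $L'^2$ inside the root must be controlled. I would first try to handle these with the sharper Bernstein-type bound
\begin{equation*}
\mathbb{P}(X \le \mu - t) \le \exp\left(-\frac{t^2}{2\mu(1-\epsilon)}\right),
\end{equation*}
evaluated at $\mu = m + \sqrt{2mL}$, and then check whether the slack from the factor $(1-\epsilon)$ pays for $\ln\binom{m}{k}$.

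If that fails, I will state honestly that the corollary holds as the asymptotic (leading-order) inversion of Theorem~\ref{theorem}. In that version, $\ln\frac{1}{\delta}$ is replaced by $\ln\frac{1}{\delta} + \ln\binom{m}{k}$, and the additive terms $O(L')$ are dropped. This matches the standard approximate sample-complexity expressions derived from \cite{campi}. Finally, monotonicity of $\mathbb{P}(X \le m)$ in $\epsilon$ shows that every larger $\epsilon$ also satisfies the condition, which justifies the "$\ge$" in \eqref{k removal}.
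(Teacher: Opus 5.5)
Your Steps 1--2 are correct, but Step 3 asks for something false, so the proposal cannot be completed. The paper defers its own proof to supplementary material, so the comparison has to be on the merits.

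\emph{The statement is false under your reading.} Read as a sufficient condition for the hypothesis of Theorem~\ref{theorem} (your reading, and the way the paper uses $\epsilon_1$), inequality \eqref{k removal} fails. Take $k=0$, $|\phi|=1$. Then $m=0$ and the right-hand side is $0$, so every $\epsilon\in(0,1)$ would be certified. But the hypothesis of Theorem~\ref{theorem} is $(1-\epsilon)^N\le\delta$, which is false for $\epsilon<1-\delta^{1/N}$.

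This is not a degenerate artifact. For $|\phi|=2$, $k=0$, $\delta=10^{-3}$, $N=1000$, the stated threshold is $N\epsilon=1+\sqrt{2\ln 1000}\approx 4.72$. Yet $(1-\epsilon)^N+N\epsilon(1-\epsilon)^{N-1}\approx 0.05$, about fifty times $\delta$. What is missing is the additive $\ln\frac1\delta$ (already needed when $k=0$, where $N\epsilon$ must be of order $\ln\frac1\delta$) and the factor $\binom{m}{k}$.

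Reading \eqref{k removal} as a \emph{necessary} condition does not rescue it. For $|\phi|=1$, $k=10^4$, $N=10^6$, $\delta=10^{-3}$, one has $\mathbb{P}(X\le m)\approx 1.3\times10^{-4}<\delta$ already at $N\epsilon=10370$, below the stated threshold $\approx 10371.7$. So no argument can establish the inequality as written.

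\emph{Your proposed repair also fails quantitatively.} The bound $\exp(-t^2/(2\mu(1-\epsilon)))$ is a sub-Gaussian lower-tail bound valid for $\epsilon\le 1/2$, not Bernstein's inequality. At $\mu=m+\sqrt{2mL}$, $t=\sqrt{2mL}$, its exponent is $\frac{mL}{(m+\sqrt{2mL})(1-\epsilon)}$. This is at least $L$ only if $N\epsilon^2\ge\sqrt{2mL}$; with $N=1000$, $\delta=10^{-3}$, $N\epsilon\approx 100$, that condition reads $10\ge 31$. So the bound already exceeds $\delta$ before you multiply by $\binom{m}{k}=\binom{m}{|\phi|-1}$. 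That factor grows like $m^{|\phi|-1}$, and no bounded slack can absorb it.

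\emph{Your fallback is mislabelled for the same reason.} Dropping the $+L'$ and $L'^2$ terms is a genuine lower-order simplification. Replacing $L'=\ln\frac1\delta+\ln\binom{m}{k}$ by $\ln\frac1\delta$ is not. The factor $\binom{m}{k}$ sits in the hypothesis of Theorem~\ref{theorem} itself, so any inversion needs $\mathbb{P}(X\le m)\le\delta/\binom{m}{k}$. For fixed $|\phi|\ge 2$ one has $\ln\binom{m}{k}\sim(|\phi|-1)\ln m\to\infty$, so the $\sqrt{m}$-order term is $\sqrt{2mL'}$, not $\sqrt{2mL}$.

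\emph{What you can claim.} Steps 1--2, together with your monotonicity remark, rigorously prove a corrected corollary. The conclusion of Theorem~\ref{theorem} holds whenever $N\epsilon\ge m+L'+\sqrt{L'^2+2mL'}$, with $m=k+|\phi|-1$ and $L'=\ln\frac1\delta+\ln\binom{k+|\phi|-1}{k}$. That statement, not \eqref{k removal}, is the one to claim.
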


\begin{proof}
    The proof of this corollary is included in our supplementary material.
\end{proof}

After evaluating the safety-constraints with respect to $N$ scenarios, substituting the parameters $k_1, \delta$ and $N$ allows us to find a lower-bound on $\epsilon$, i.e. $\epsilon_1$. In practice, this method of constructing a barrier-certificate grows the barrier region outward over the course of the optimization from the set of initial states that are assumed to exhibit safe behaviour.

\subsection{Upper-Bound}

Within this section we aim to formulate the dual optimization by training a convex barrier-certificate $B_{\phi2}$ using a linear function parametrized using $\phi_2 := \{ \phi_{w2}, \phi_{b2} \}$, where $B_{\phi2}(z) = \phi_{w2} \cdot z + \phi_{b2}$. Similar to the lower-bound barrier-certificate, the optimization program for the upper-bound barrier-certificate is defined as follows.

\begin{align} \label{ccp3}
    & \min_\phi \max_{z \in Z^{B_{\phi2}}, z' \in \zeta} B_{\phi2}(z')  \;, \nonumber \\
    s.t. \;\;\; & \mathbb{P}(U(\tau) < 0) > \epsilon_2 , \nonumber \\
    & \forall z \in  \zeta \sim \mathcal{Q}_\theta(\zeta|\tau)
\end{align}

Unlike the objective of the primal problem which is to grow a barrier region outward, the objective of the dual is to shrink $Z^{B_{\phi2}}$ inward to a generous estimate of the safe region containing $z \sim \mathcal{Q}_\theta (z|s)$, $\forall s \in S^{init}$. The deterministic relaxation is as follows.

\begin{align} \label{ccp4}
    & \min_\phi \max_{z \in Z^{B_{\phi2}}, z' \in \zeta} B_{\phi2}(z')  \;, \nonumber \\
    s.t. \;\;\; & U(\tau_i) < 0, \nonumber \\
    & \forall z \in  \zeta \sim \mathcal{Q}_\theta(\zeta|\tau_i), \nonumber \\
    & \forall i \in \{ 1, 2 ... N \}
\end{align}

We denote the number of violations of the constraints in this problem using $k_2$. To avoid bias during the construction of the barrier-certificates, we construct both simultaneously using an identical pool of scenarios. We present the pseudocode for this process in Algorithm \ref{algo1a}.

%% file: Sections/5-Learning.tex
\section{Barrier-Certificate Aided Learning}


Within this section, we discuss how to exploit the barrier-certificates constructed in Section \ref{optimization} in order to reduce the width of the bound on $\epsilon$. With a generative representation of the state-space, i.e. VAE that is trained to reconstruct the state-space, and barrier-certificates $B_{\phi1}(z)$ and $B_{\phi2}(z)$, we are capable of identifying and generating states that are adjacent to the conservative safe-region within the latent space $Z$. The role of the VAE decoder within this process is to map the encoded state within the latent space to the state-space $S$ and reconstruct it.

The solutions to the primal problem are also solutions to the dual problem, and based on their construction within Algorithm \ref{algo1a}, the conservative barrier region is contained within the less conservative barrier region, i.e. $Z^{B_{\phi1}} \subset Z^{B_{\phi2}}$. As our objective is to minimize the set difference between them, it is necessary for the agent to encounter these states.

Note that we use \textit{Phase 1} to describe the agent learning the initial policy and \textit{Phase 2} to describe the process of training the agent using the states generated using the barrier-certificates. Though, even with a generative model, unless transition dynamics are known, extrapolated or the encoding preserves state adjacency, it is impossible to safely train the policy further by attempting to generate subsequent states and trajectories. Therefore, we posit two model-agnostic methods to carry out the second phase of learning.

The first method is to use human/expert guidance. Intuitively, the aim of this process would be to identify states $\{z_1, z_2 ... z_M \}$ lying within $Z^{B_{\phi2}}$ and outside of $Z^{B_{\phi1}}$ in the latent space; decode them using $s_i \sim \mathcal{P}_\theta (s|z_i) \forall i \in \{ 1, 2 ... M\}$; and using an expert, initialize trajectories with the agent within the aforementioned states. This is possible in environments where a highly configurable simulator is utilized. In the absence of the aforementioned simulator, this method is currently impractical in larger, more dynamic environments due to the complexity of interactions. However, the direction of research in \cite{gen1} suggests that it will become practical in the near future.

The second method is described in Algorithm \ref{algoPhase}. The aim of this method is to identify the set of states within $S^{init}$ that are likely to lead to states within $Z^{B_{\phi2}}$ and outside of $Z^{B_{\phi1}}$. Though intuitively less sample-efficient than the first method, this allows for a model-free approach to increasing the robustness of the policy without expert guidance or learning the transition dynamics of the underlying MDP. We demonstrate the results of both methods in Section \ref{experiments}.

Note the sampling procedure in Line 5 of Algorithm \ref{algoPhase}. We construct a set of states belonging to the latent space $Z$ that are within a minimal euclidean distance of states that are known to be unsafe. Our choice of VAEs is necessary for this procedure as proximity of states in the latent space is conditioned on the similarity between latent characteristics of the input, i.e. observations of states belonging to $S$. As such, we can use this encoding property of the VAE to discover new states that are adjacent to and share observable similarities with the unsafe states. 






\begin{algorithm} [t]
\footnotesize
    \caption{Constructing Barrier-Certificates}
    \label{algo1a}
    \textbf{Setup:}
    \begin{algorithmic} 
        \STATE Trained policy $\Pi_t$,
        \STATE Trained VAE encoding distribution $\mathcal{Q}_\theta(z | s)$,
        \STATE Collected $N$ trajectories $\{ \tau_1 ... \tau_N \}$ from policy $\Pi_t$.
    \end{algorithmic}

    \textbf{Procedure:}
    \begin{algorithmic} [1] 
        \STATE Initialize parameters $\phi_1, \phi_2$ such that $Z^{B_{\phi1}} = \{z : z \sim \mathcal{Q}_\theta(z | s), s \in S^{init} \}$ and $Z^{B_{\phi2}} \approx Z$
        \STATE Identify $k_1$ trajectories that violate condition $U(\tau) < 0$
        \STATE Let $S^{N-k_1}$ denote a set of all the states within the $N-k_1$ non-violating trajectories
        
        \REPEAT
        \STATE $Z^{max} := \{\}$
        \FORALL{$ \{ s \in S^{N-k_1} : B_{\phi1}(z) \geq 0, z \sim \mathcal{Q}_\theta(z | s) \}$}

        \STATE $Z^{B_{\phi1}} \leftarrow Z^{B_{\phi1}} \cup \{ z \sim \mathcal{Q}_\theta(z | s)$ \}
        \STATE $S_\tau(s) := \{ s' : (s, s')$ is a transition within the $N-k_1$ non-violating trajectories$ \}$

        \STATE $Z' := \{ z' : z' \sim \mathcal{Q}_\theta(z' | s'), s' \in S_\tau(s) \}$

        \STATE $Z^{max} \leftarrow Z^{max} \cup \arg\max_{z'} ( \{ B_{\phi1}(z') : z' \in Z' \} )$
        
        \ENDFOR

        \STATE $z^{max} := \arg\max_{z'} ( \{ B_{\phi1}(z') : z' \in Z^{max}  \} )$
        \STATE Compute gradient $\nabla_{\phi_1} B_{\phi1}(z^{max})$
        \STATE Update $\phi_1 \leftarrow \phi_1 + \alpha\nabla_{\phi_1} B_{\phi1}(z^{max})$, for some l.r. $\alpha$
        
        \UNTIL{$B_{\phi1}$ converges}

        \STATE Identify $k_2$ trajectories that violate condition $U(\tau) > 0$
        \STATE Let $S^{N-k_2}$ denote a set of all the states within the $N-k_2$ non-violating trajectories
        \STATE Repeat steps 4-15 using $S^{N-k_2}$ and $\phi_2$, minimizing the region $Z^{B_{\phi2}}$. The exact construction is described in the supplementary materials.
    \end{algorithmic}
\end{algorithm}

\begin{algorithm} [t]
\footnotesize
    \caption{Phase 2 Of Learning}
    \label{algoPhase}
    \textbf{Setup:}
    \begin{algorithmic} 
        \STATE Trained VAE encoding distribution $\mathcal{Q}_\theta(z | s)$,
        \STATE Trained VAE decoding distribution $\mathcal{P}_\theta(s | z)$,
        \STATE Collected $N$ trajectories $\{ \tau_1 ... \tau_N \}$ from policy $\Pi_t$,
        \STATE Constructed barrier-certificates $B_{\phi1}(z)$ and $B_{\phi2}(z)$.
    \end{algorithmic}

    \textbf{Procedure:}
    \begin{algorithmic} [1] 
        \STATE Sample encoded trajectories $Z_\tau := \{ \zeta_i : \zeta \sim \mathcal{Q}_\theta(\zeta | \tau_i) \} \}$
        \STATE Construct a set of states $\hat{z} := \{ z  : B_{\phi1}(z) < 0, B_{\phi2}(z) > 0, \text{ for each state within } Z_\tau \}$ 
        \STATE $S^{init'} := \{ \} $
        \FORALL{$z \in \hat{z}$}
            \STATE $\hat{z}' := \{z\} \cup \{ z' \in Z : ||z-z'||_2 < \beta \}$ for some infinitesimal positive value $\beta$
            \STATE $S^N := \{s : s \sim \mathcal{P}_\theta(s | z'), z' \in \hat{z}' \}$
            \STATE $S^{init'} \leftarrow S^{init'} \cup \{ s_0 : s_0, s \in \tau_i, s \in S^N \}$
        \ENDFOR

        \STATE Continue training the agent with initial states sampled uniformly from $S^{init'}$
    \end{algorithmic}
\end{algorithm}

\begin{table*}
    \centering
    \begin{tabular}{|c||c|c|c|c|}
    \hline
            & \multicolumn{2}{|c|}{\textit{Phase 1}} & \multicolumn{2}{|c|}{\textit{Phase 2}} \\ \hline
            & \multicolumn{4}{|c|}{\textbf{Ant}} \\ \hline
            
         \textbf{Algorithm} & $(\epsilon_1, \epsilon_2)$ & Av. Reward & $(\epsilon_1, \epsilon_2)^*$ & Av. Reward\\
         \hline
         \textit{Our Work}* & 0.11, 0.41 & 0.62 & 0.09, 0.18 & 0.94 \\
         \hline
         Vanilla PPO & - & - &  0.20, 0.45 & 0.81 \\
         \hline
         Epsilon Greedy PPO & - & - & 0.22, 0.48 & 0.77 \\
         \hline
         Perturbed Exploration & - & - & 0.12, 0.51 & 0.74  \\
         \hline
         CoDE & - & - & 0.08, 0.29 & 0.93 \\
         \hline
         Genetic Curriculum & - & - & 0.14, 0.45 & 0.90 \\
         \hline
         Curriculum Adv. Training  & - & - & 0.09, 0.24 & 0.96 \\
         \hline

         & \multicolumn{4}{|c|}{\textbf{CartPole}} \\ \hline

         \textit{Our Work}* & 0.21, 0.48 & 0.80 & 0.10, 0.15 & 0.96 \\
         \hline
         Vanilla PPO & - & - &  0.21, 0.43 & 0.91 \\
         \hline
         Epsilon Greedy PPO & - & - & 0.19, 0.52 & 0.83 \\
         \hline
         Perturbed Exploration & - & - & 0.12, 0.51 & 0.62  \\
         \hline
         CoDE & - & - & 0.15, 0.29 & 0.97 \\
         \hline
         Genetic Curriculum & - & - & 0.06, 0.24 & 0.93 \\
         \hline
         Curriculum Adv. Training  & - & - & 0.10, 0.19 & 0.95 \\
         \hline
    \end{tabular}
    \caption{Comparing the PAC Guarantees on Policy Safety Between Algorithms}
    \label{table1}
\end{table*}

%% file: Sections/6-Experiments.tex
\section{Experiments} \label{experiments}


\subsection{Environments and Methodology}

In this section, we describe the experiments conducted in our work and show the practical results of applying Algorithms \ref{algo1a} and \ref{algoPhase} in the training of a policy with safety guarantees. We utilize the \textbf{Ant} and \textbf{CartPole} environments from OpenAI Gym \cite{gym} in order to do so. The choice of \textbf{CartPole} is motivated by its use in classic control and reachability problems, whereas the choice of \textbf{Ant} is motivated by complexity/dimensionality. Within our experiments, we intend to obtain PAC guarantees on policy safety with respect to predetermined safety constraints and demonstrate the efficacy of Algorithm \ref{algoPhase} in tightening the upper and lower bounds on error. 

\textbf{\textit{Our methodology}} consists of \textbf{(1)} determining an unsafe set of states and training a policy using a vanilla RL algorithm, i.e. phase 1; \textbf{(2)} we then determine the PABCs using Algorithm \ref{algo1a} and Corollary \ref{corollary}; \textbf{(3)} we continue training the policy using Algorithm \ref{algoPhase} and redetermine the error-bounds, i.e. phase 2; \textbf{(4)} we compare this with existing methods of targeted learning in terms of the error-bounds derived as well as performance. The specific RL algorithm we utilize in our study to train the policy is the default implementation of \textit{Proximal Policy Optimization} (PPO) \cite{ppo} from the \textit{stable-baselines3} software library \cite{sb3} using the pixel observation of the state. We list hyperparameters for these experiments and initialization seeds in our supplementary material.



Lastly, we comment on the method of training the VAE in our experiments. We utilize a vanilla VAE using the ELBO loss detailed in \cite{VAE}. We weight the KL divergence term in the loss with a coefficient of $0.6$ during the training process. We conduct a batch update to train the VAE following each episode in Phase 1 of learning, where the model aims to reconstruct the pixel values of the encountered states. We compare our methodology of producing sharp guarantees on safety with curriculum-driven methods in Table \ref{table1} using the mean across a range of seeds.

\begin{figure}
  \includegraphics[width=\linewidth]{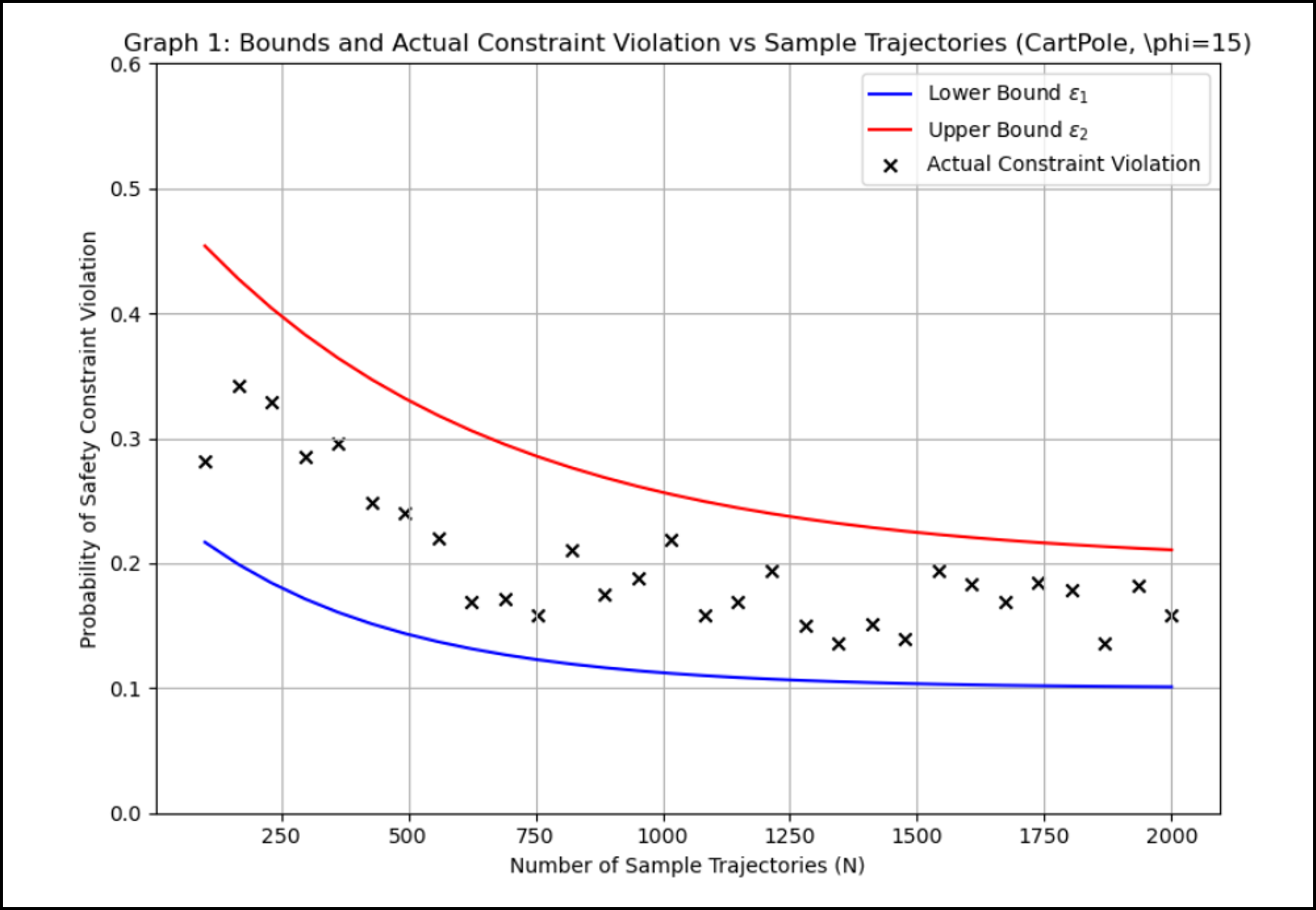}
  \caption{Comparing the error bounds against the actual violation probability per collected samples}
  \label{fig1}
\end{figure}

\subsection{Results}
Per Corollary \ref{corollary}, the values of $\epsilon_1, \epsilon_2$ are dependent on user-selected values $N, \delta$ and the constraint violation parameter $k$. For the results in Table \ref{table1}, $\{ N=1000, \delta=0.001, \beta=0.05, \alpha=0.01 \}$. We elaborate further on hyperparameter choice in our Appendix. To standardise learning, PPO is run for $1e5$ episodes prior to applying a curriculum-based method for another $1e5$ episodes. With Phase 1 carried out using a vanilla PPO algorithm, we compare the results of running Phase 2 of learning using various other learning methods. We include the vanilla PPO as a baseline for Phase 2 alongside the following methods: an epsilon greedy implementation of PPO with $\epsilon=0.2$; an implementation of PPO where Gaussian noise was added to the transition function; CoDE \cite{curr2}; Genetic Curriculum Learning \cite{curr7}; and Curriculum Adversarial Training \cite{curr6}.

The results in Table \ref{table1} show that the learning conducted in Phase 2 by our method results in better quality guarantees across both environments when compared against any of the other methods, with only Curriculum Adversarial Training performing similarly in the CartPole environment. Though, this is not observed in the Ant environment. We hypothesize that this is due to a scaling of the environment complexity (with the complexity of Ant being higher) and our model being able to cope better with more complex transition dynamics. We also note that the performance of our algorithm (recorded using the average reward normalized to [0, 1]) is comparable to the other methods, only being marginally outperformed by by CoDE within the CartPole environment and Curriculum Adversarial Training in the Ant environment. From this, we posit that we are able to achieve a robust learning process without a significant tradeoff in learning efficiency.

We also briefly outline our results in Figure \ref{fig1}, which demonstrate the change in tightness of the bounds with respect to $N$, the number of sample trajectories collected, within the CartPole environment.



%

%% file: Sections/8-Conclusion.tex
\section{Conclusion}

In our work, our technique improves the lower-bound guarantees on safety-constraint violation by an RL policy by generating novel states that are outside the distribution of states encountered during training, but are still between regions of the encoded space proposed by the upper and lower bound barrier-certificates. Beyond demonstrating improved tightness of bounds, our findings emphasize the importance of directly targeting the epistemic uncertainty of policies as opposed to relying on exploratory heuristics. The incorporation of generative modelling in the verification and training process marks a step toward explainable reinforcement learning systems that can provide quantifiable assurances of safety before deployment. In future work, we aim to contrast this method with varied embedding functions and representation in order to observe learning stability. We also aim to extend our work to subtask-specific scenario generation with the intention of applying our method to more complex, hierarchical RL designs that have subtask-specific safety-constraints.